\documentclass{bmvc2k}

\usepackage{times}
\usepackage{epsfig}
\usepackage{graphicx}
\usepackage{amsmath}
\usepackage{amssymb}
\usepackage{amsthm}
\usepackage{multirow}
\usepackage{url}
\usepackage[utf8]{inputenc}
\usepackage[ruled,vlined]{algorithm2e}
\usepackage{wrapfig}

\usepackage[capitalize,nameinlink]{cleveref}
\newtheorem{theorem}{Theorem}[section]
\newtheorem{proposition}{Proposition}

\newtheorem{lemma}[theorem]{Lemma}
\theoremstyle{definition}

\theoremstyle{remark}

\newcommand{\mres}{\mathbin{\vrule height 1.6ex depth 0pt width
0.13ex\vrule height 0.13ex depth 0pt width 1.3ex}}
\newcommand{\R}{\mathbb{R}}

\DeclareMathOperator*{\argmax}{\arg \max}%

\DeclareUnicodeCharacter{221}{\rightangle}



\title{Fast Convex Relaxations using Graph Discretizations}

\addauthor{Jonas Geiping}{jonas.geiping@uni-siegen.de}{1}
\addauthor{Fjedor Gaede}{fjedor.gaede@wwu.de}{2}
\addauthor{Hartmut Bauermeister}{hartmut.bauermeister@uni-siegen.de}{1}
\addauthor{Michael Moeller}{michael.moellerg@uni-siegen.de}{1}

\addinstitution{
 Department of Electrical Engineering and Computer Science\\
 University of Siegen\\
 Siegen, Germany
}
\addinstitution{
Institut fur Analysis und Numerik,\\
 Westfälische Wilhelms-Universität Münster,\\
Münster, Germany
}

\runninghead{Geiping, Gaede, Bauermeister, Moeller}{Fast Relaxations using Graphs}


\begin{document}

\maketitle

\begin{abstract}
   Matching and partitioning problems are fundamentals of computer vision applications with examples in multilabel segmentation, stereo estimation and optical-flow computation. These tasks can be posed as non-convex energy minimization problems and solved near-globally optimal by recent convex lifting approaches. Yet, applying these techniques comes with a significant computational effort, reducing their feasibility in practical applications. We discuss spatial discretization of continuous partitioning problems into a graph structure, generalizing discretization onto a Cartesian grid. This setup allows us to faithfully work on super-pixel graphs constructed by SLIC or Cut-Pursuit,
    massively decreasing the computational effort for lifted partitioning problems compared to a Cartesian grid, while optimal energy values remain similar: The global matching is still solved near-globally optimal.
   We discuss this methodology in detail and show examples in multi-label segmentation by minimal partitions and stereo estimation, where we demonstrate that the proposed graph discretization can reduce runtime as well as memory consumption of convex relaxations of matching problems by up to a factor of 10. 
\end{abstract}

\section{Introduction}
Matching problems and the closely inter-related minimal partitioning problems are low-level computer vision tasks that build a backbone for a variety of applications such as multi-label segmentation \cite{boykov_graph_2006-1}, stereo estimation \cite{ishikawa_occlusions_1998,ranftl_pushing_2012} and optical flow estimation \cite{horn_determining_1981,brox_high_2004}. However, posing these problems as energy minimization problems leads to non-convex objectives that are difficult to solve. In the last years, it has been demonstrated that functional lifting techniques are very well suited for solving these non-convex minimization problems via convex relaxations in a higher-dimensional space. Unfortunately, despite the precise solutions these methods provide, they incur significant costs in memory and computational effort, due to the high dimensionality of the lifted problem.


\begin{figure}
    \centering
    \includegraphics[width=0.35\textwidth]{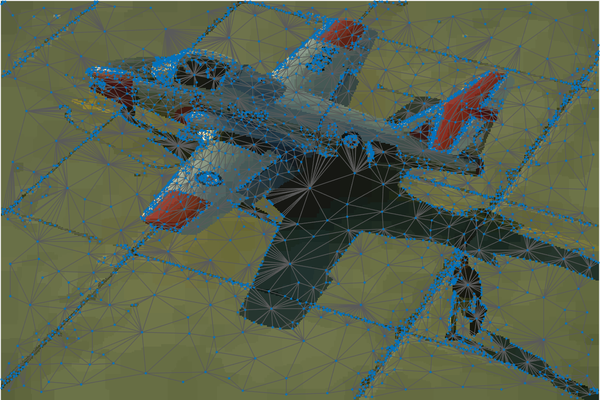}
    \includegraphics[width=0.35\textwidth]{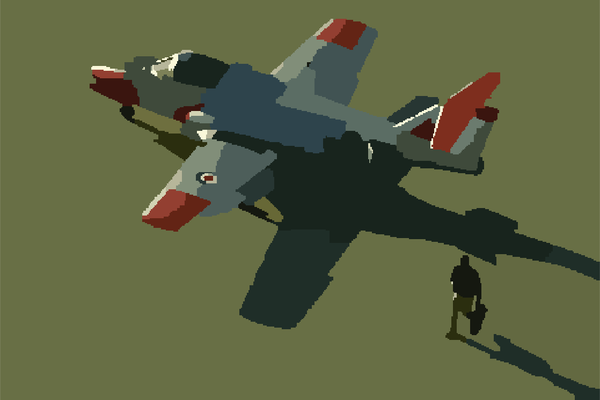}
    \caption{Graph discretization on the left and segmentation by convex relaxation with 32 labels, computed on the graph structure on the right. The segmentation is $3.03\%$ less optimal compared to  relaxation on a full Cartesian grid, but requires only around $4.8\%$ of computation time, the original problem has $4.9$ mio. variables, and the reduced problem only $120$k.}
    \label{fig:teaser}
\end{figure}

Consider the continuous minimal partitions problem, also referred to as piecewise-constant Mumford-Shah problem \cite{mumford_optimal_1989}, which builds the basis of the aforementioned computer vision applications,
\begin{equation}\label{eq:minimal_partitions}
    \min_{\lbrace P_k\rbrace_{k=1}^L} \sum_{k=1}^L \int_{P_k} -f_k(x) + \frac{1}{2}\operatorname{Per}(\Omega, P_k).
\end{equation}
Given a set of $L$ potential functions $f_k$ we are looking for a partitioning of the set $\Omega$ into $L$ non-overlapping partitions $\lbrace P_k \rbrace_{k=1}^L$, i.e. $P_k \cap P_l = \emptyset$ if $k\neq l$ and $\bigcup_{k=1}^L P_k = \Omega$ \cite{chambolle_convex_2012}. Without the regularizing perimeter term, the solution is given by $\argmax_k f_k(x)$ for every $x \in \Omega$, but under inclusion of the second term in \eqref{eq:minimal_partitions}, the perimeter of each partition $\operatorname{Per}(\Omega, P_k)$ is penalized, leading to spatially coherent solutions, where every point is "matched" to a partition, but the global surface energy stays minimal. 

Minimal partitions problems are abundant in imaging. In the discrete setting, they directly relate to the Potts model \cite{potts_generalized_1952} and MRFs \cite{boykov_fast_2001, boykov_graph_2006-1}. This variability hinges on the choice of potential functions $f_k$. For multi-label segmentation, for example, we can consider each $f_k$ to give the prior likelihood that the point $x$ should be assigned label $k$, with the likelihood being computed by model-based approaches \cite{cremers_convex_2011} or returned as the output of a neural network \cite{chen_deeplab:_2016}. On the other hand, considering every partition with label $k$ to encode a displacement of $k$ pixels between two stereo images, the same framework can be used to solve stereo matching problems \cite{zach_fast_2008}. Further immediate examples include optical flow\cite{horn_determining_1981},  scene flow and multiview reconstruction \cite{kolev_integration_2008}.

To be able to solve \cref{eq:minimal_partitions}, we minimize over a set of partitions. This discrete matching problem at the heart of the minimal partitions problem is in general NP-hard. 
In practice, one of the most powerful approaches is the solution of a suitable convex relaxation of the original minimal partitions problem. To do so, the minimization over partitions is first replaced by minimization over their characteristic functions $u_k : \Omega \to \lbrace 0, 1 \rbrace$, satisfying $\sum_{k=1}^L u_k(x) = 1 ~ ~ \forall x \in \Omega$,
\begin{equation}\label{eq:minimal_partitions_binary}
    \min_{\lbrace u_k\rbrace_{k=1}^L} \ \sum_{k=1}^L \ \int_\Omega -f_k(x)u_k(x) + \int_\Omega |Du_k|,
\end{equation}
where we have equivalently replaced the perimeter of a set by the total variation of its characteristic function, see \cite{chambolle_convex_2012}. In a next step, the functions $u_k$ are relaxed to take values in the full interval $[0,1]$, either directly \cite{zach_duality_2007, lellmann_convex_2009}, or by jointly deriving tighter convex reformulations of the regularizer \cite{chambolle_convex_2012}.
Relaxation approaches often lead to near-optimal solutions with high fidelity \cite{pock_convex_2008, strekalovskiy_tight_2011}, yet the computational effort amounts to solving a non-smooth, non-strongly convex optimization problem over all functions $u_k$, each of which is usually discretized to be as large as the given potential functions. Especially when $k$ is large, memory costs quickly become impractical for computer vision applications.

In this work we hence consider strategies to remediate the computational costs of functional lifting techniques, without majorly impeding their global matching capabilities. As illustrated in Figure \ref{fig:teaser}, we propose to first discretize the problem on a precomputed graph structure instead of a Cartesian grid, and then solve the convex relaxation on the graph. This strategy leads to significant computational advantages while sacrificing almost no accuracy in terms of the energy of the final solution.

\section{Related Work}
The minimal partitions problem is a prime example of energy minimization methods in computer vision \cite{mumford_optimal_1989,rudin_nonlinear_1992,burger_guide_2013}, which have found widespread use. In the context of convex relaxations of these partition problems, there have been works in as diverse applications such as stereo estimation \cite{pock_convex_2008,pock_global_2010,werlberger_efficient_2011,ranftl_pushing_2012,ranftl_minimizing_2013}, optical flow \cite{strekalovskiy_tight_2011,strekalovskiy_convex_2014}, segmentation \cite{zach_what_2012,zach_duality_2007,lellmann_convex_2009,chambolle_convex_2012},and optimization on manifolds \cite{lellmann_total_2013}, with algorithmic improvements such as \cite{souiai_entropy_2015}. 

Previous work discusses the optimal discretization of the continuous label dimension \cite{mollenhoff_sublabel-accurate_2016,laude_sublabel-accurate_2016-1,mollenhoff_sublabel-accurate_2017}, reducing the computational effort of functional lifting in a variety of applications. In our work, we discuss an orthogonal direction of research, as we are discussing compact discretizations of the image space. 

The choice of efficient discretization of the input image data is directly related to superpixel approaches, e.g. \cite{achanta_slic_2012,uziel_bayesian_2019}. Their general idea is to generically reduce the computational complexity of any (pixel-based) numerical algorithm, by locally grouping pixels of similar color to larger superpixels. The most prominent algorithm in current practice is SLIC (Simple Iterative Linear Clustering) \cite{achanta_slic_2012}.  
Ideally, the superpixel setup should also be chosen by an appropriate minimization procedure that adheres object edges. However, edge adherence is often costly. An interesting exception is the Cut-Pursuit algorithm \cite{landrieu_cut_2016,landrieu_cut_2017}, which solves total variation minimization and related problems in a fast sequence of binary graph cuts, making it competitive as a discretization step and leading to boundaries that better adhere with minimal partitions. This approach has been successfully applied in practice in such works as \cite{landrieu_large-scale_2017,guinard_piecewise-planar_2019} and we will contrapose a superpixel structure generated by Cut Pursuit with one generated by SLIC in our main comparison to a Cartesian grid.

\section{Graph Discretizations for Convex Relaxations}
\subsection{Preliminaries}
Let us first introduce our general notation. For the discrete setup we consider an undirected graph structure is defined by its vertices $V$, edges connecting vertices $E \subset V \times V $ and weights of these edges $w \in \R^{|E|}$. We refer to \cite{elmoataz_nonlocal_2008} for details.

The continuous minimal partition problem requires the definition of the total variation of a function $u = (u_1, \dots, u_L) \in L^1(\Omega, \R^L)$ as
\begin{align*}
    TV(u) = \sup  \bigg\lbrace &- \int_\Omega \sum_{k=1}^L u_k(x) \operatorname{div}\mathbf{p}_k(x) \ dx, ~\ 
    \mathbf{p} \in C^1_c(\Omega, \R^{d \times L}), ~\  \sum_{k=1}^L |\mathbf{p}_k(x)|^2\leq 1  \bigg\rbrace,
\end{align*}
Note that the above definition reduces to $TV(u) = \sum_{k=1}^L \int_\Omega |\nabla u_k(x)|~dx$ for smooth $u$. We define $u$ to be an element of the space of bounded variation $SBV(\Omega,\R^L)$ if $TV(u)$ is finite. We can then identify this value with the mass of the distributional derivative $Du$, i.e. $\int_\Omega |Du| = TV(u)$. 
The bounded Radon measure $Du$ can be decomposed \cite[Thm. 10.4.1]{attouch_variational_2006} into
\begin{equation}\label{eq:decomposition}
    Du = \nabla u \ \mathcal{L}^d + Cu + (u^+-u^-) \otimes \nu_u \mathcal{H}^{d-1} \mres J_u,
\end{equation}
where $\mathcal{L}^d$ is the Lebesgue measure, $J_u$ is the jump set of $u$, where $u^+\neq u^-$, i.e. the values at the boundary differ, $\nu_u$ the normal of the boundary and $Cu$ a remainder Cantor part. 
In the following we will consider functions $u \in SBV(\Omega, \R^L)$, which is the space of functions for which $Cu=0$ \cite{ambrosio_functions_2000,mollenhoff_sublabel-accurate_2017}. 
We further define the perimeter of a measurable set $P \subset \Omega$, $\operatorname{Per}(\Omega, P)$, in turn by the total variation of its characteristic function $\chi_P:\Omega \to \R$  
\cite{chambolle_convex_2012}, the boundary of a set as $\partial S = \bar{S}  \setminus \operatorname{int}(S)$, and the length of the boundary $\Gamma_{k,l} = \partial P_k \cap \partial P_l$ between two sets $P_k, P_l$ via
\begin{equation}
    \lvert \Gamma_{k,l} \rvert = \mathcal{H}^{d-1}(\Gamma_{k,l}), 
\end{equation}
where, again, $\mathcal{H}^{d-1}$ denotes the $d-1$-dimensional Hausdorff measure. These definitions allow us to examine the continuous boundary of shapes. Refer to \cref{fig:visualization}, where these continuous objects are marked in red.
\subsection{Graph Discretization}\label{sec:theory}
We are interested in solving the continuous minimal partition problems \cref{eq:minimal_partitions_binary} numerically. To do so we need to translate the problem into the discrete setting. To take a step from the continuous definitions to a discrete problem, we make use of the fact that we expect solutions $u^*$ to the minimal partitioning problem to be piecewise constant with a finite number of pieces. A good discretization to a finite setting that mimics this piecewise constant structure exactly. 
\emph{We hence represent the discretization by a graph of candidate constant sets, the nodes of which represent each separate constant piece and where neighboring pieces are connected by edges in the graph}. After solving the matching problem on this discrete graph, the final solution $u^*$ can be reassembled by assigning to each constant piece its matched value according to the respective value of the node that represents it. This setup is sketched in \cref{fig:visualization}.
\begin{figure}
    \begin{subfigure}
         \centering
         \includegraphics[width=0.35\textwidth]{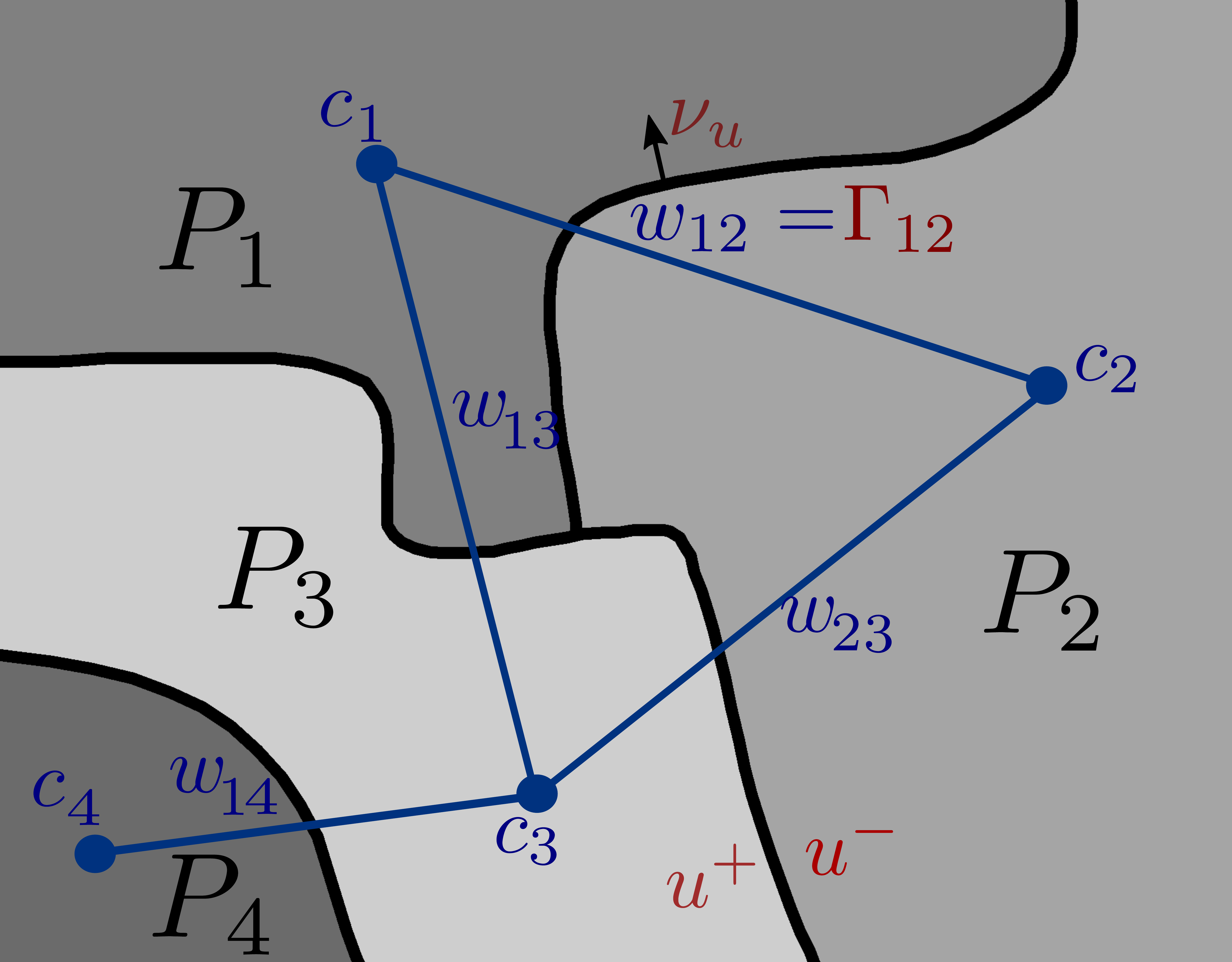}
     \end{subfigure}
     \begin{subfigure}
         \centering
         \includegraphics[width=0.35\textwidth]{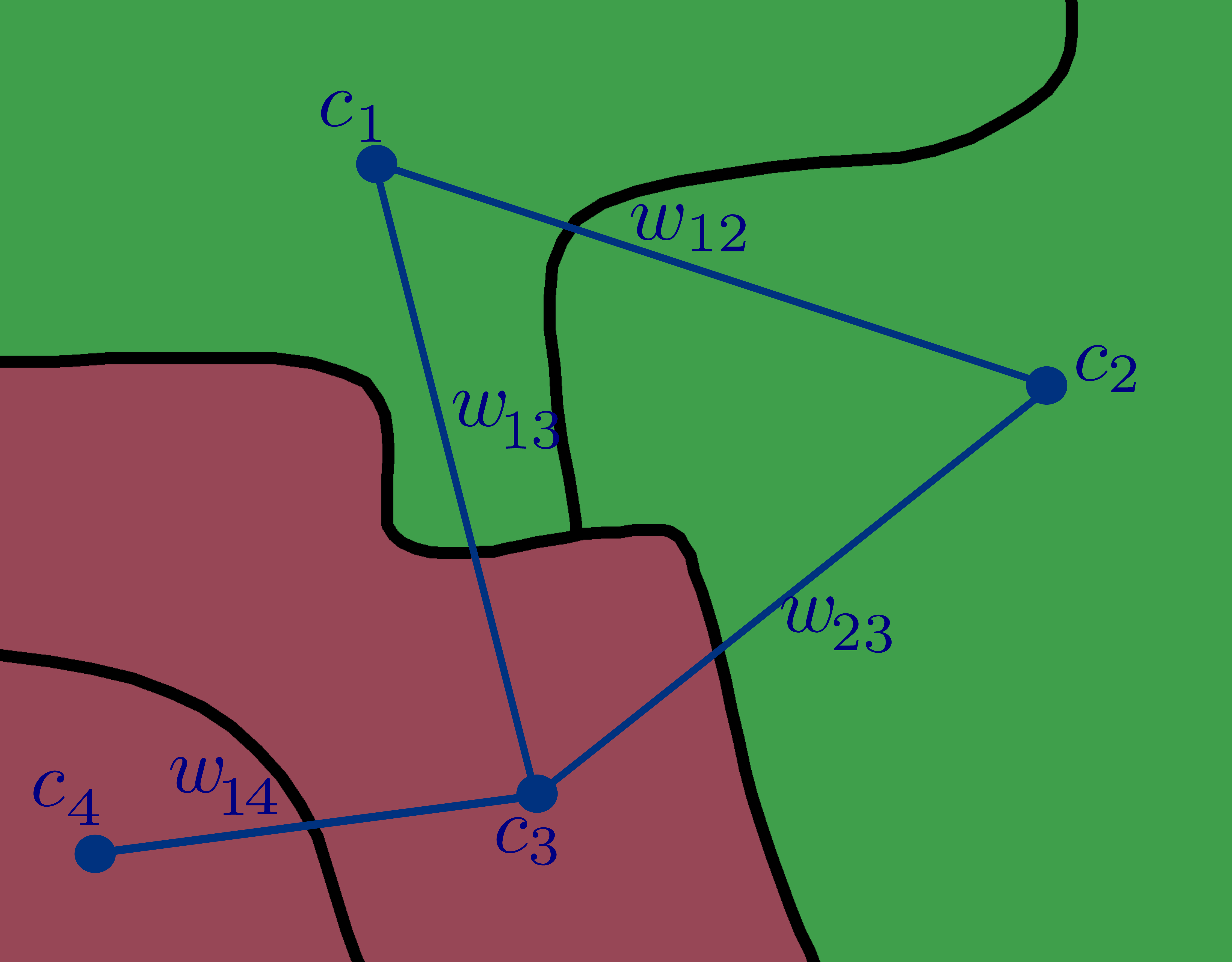}
     \end{subfigure}
    \centering
    \caption{Sketch of the discretization process for a graph-based discretization. \textit{Left:} The underlying continuous function $u \in SBV_\Pi(\Omega,\R^L)$ is pictured in black, with piecewise-constant partitions $\Pi=\{P_1,P_2,P_3,P_4\}$ shown in gray as well as the discrete graph structure in blue. \textit{Right:} A minimal partitions problem with two potentials is solved on this graph structure. Pictured is the solution $u^*$ which now corresponds to a piecewise constant solution (in red and green).}
    \label{fig:visualization}
    \vspace{-0.5cm}
\end{figure}

Note that this is a generalization of a classical discretization to a Cartesian grid. Placing a continuous function on a pixel grid corresponds to claiming that the function is piecewise-constant on every image pixel - hence the boundaries of the solution $u^*$ to the minimal partitions problem will be a subset of the boundaries imposed by the image pixels.

In slight generalization of the minimal partitions problem in \cref{eq:minimal_partitions_binary} we now discuss the type of continuous functionals $F:SBV(\Omega, \R^L)\to \R$, that we want to represent discretely. Due to the discontinuities of $SBV$, we define different components of $F$ on the continuous parts and the jump parts $J_u$ of \cref{eq:decomposition}:
\begin{align}\label{eq:functional}
\begin{split}
      F(u) = \int_{\Omega \setminus J_u} \Phi(x, u(x), \nabla u(x)) \ dx
    + \int_{J_u} \kappa\left(|u^+-u^-|\right)|\nu_u| d\mathcal{H}^{d-1},  
\end{split}
\end{align}
where $\nabla, u^+, u^-$ refer to the decomposition detailed in \cref{eq:decomposition}. $\Phi:\Omega \times \R^L \times \R^{L \times d}\to \R$ is a function defined away from the jump set of $u$, while $\kappa:\R \to \R$ is a concave function measuring the jump penalty with $\kappa(0)=0$ \cite{chambolle_convex_2012,mollenhoff_sublabel-accurate_2017}. We can consider the first term to be a generalized data term, and the second as a (jump)-regularizer.

To connect the continuous formulation of \cref{eq:functional} to a discrete setting we define the discretization as a finite set of candidate sets $\Pi = \{P_i \subset \Omega \mid P_i \cap P_j = \emptyset, \ \forall j \neq i \}$ with $M = \vert \Pi\vert $ partitions. The continuous function $u \in SBV(\Omega,\R^L)$ is assumed to be constant on every partition, so that we can denote its value on partition $P_i\in \Pi$ by a vector $c_i \in \mathbb{R}^L$. Thus, $u(x) = c_i$ for every $x\in P_i \subset \Omega$. 
The partition $\Pi$ can be represented by a set of nodes $V = \{1,\ldots, M\}$ where each node corresponds to a segment $P_i\in \Pi$. Furthermore we can describe every boundary between sets $P_i$ and $P_j$ as $\Gamma_{ij}$ and by that define an edge set $E\subset V\times V$ as $E = \{(i,j) \in V\times V \mid \lvert \Gamma_{ij} \rvert > 0, i \neq j \}$. Note, that the perimeter of some partition $P_i \in \Pi$ is given by $\textnormal{Per}_{P_i} = \sum_{(i,j)\in E}\vert \Gamma_{ij}\vert$.

Let us assume that our desired solution $u^*$, which minimizes \cref{eq:functional}, is piecewise constant. More formally, given some partition $\Pi$ let us write $u \in SBV_\Pi(\Omega, \R^L)$ to denote continuous functions in $SBV$ which are piecewise constant on the regions in $\Pi$, and assume $u^*\in SBV_\Pi(\Omega, \R^L)$. This implies that the jump set $J_u$ is a subset of $\cup_{(i,j)\in E} \Gamma_{ij}$ and that $\Omega\setminus J_u$ is a subset $\cup_{i\in V} P_i$, or, in other words, the discrete partitioning by $\Pi$ is able to represent the continuous structure of $u^*$.



Under the above assumption we can restrict the minimization of $F$ over all functions $u \in SBV(\Omega, \mathbb{R}^L)$ to those in $SBV_\Pi(\Omega, \R^L)$ which allows to simplify \cref{eq:functional} to a problem in which merely the values $c_i$ inside the piecewise constant regions are the unknowns. Let us discuss the three main components of \cref{eq:functional} separately.

\noindent \textbf{Data Term:} \\ Considering $F$ for any $u \in SBV_\Pi(\Omega, \R^L)$ allows us to rewrite the first term of \cref{eq:functional} as
\begin{align}\label{eq:dataterm}
    \begin{split}
        K(u) = \int_{\Omega \setminus J_u} \Phi(x, u(x), \nabla u(x)) \ dx 
             = \sum_{i=1}^M ~ \int_{P_i} \Phi(x, c_i, 0) \ dx 
             =: K_\Pi(c)
    \end{split}
\end{align}
which is the discrete representation $K_\Pi(c):\R^{M \times L} \to \R$ of this term that mere depends on the values $c_i$. For linear data terms such as in \cref{eq:minimal_partitions_binary}, i.e. $\Phi(x, u(x), \nabla u(x)) = \sum_{k=1}^L f_k(x) u_k(x) = f_k(x) (c_k)_i$ for $x\in P_i$, this is further simplified to 
\begin{equation}\label{eq:scalar_product}
    K_\Pi(c)=\sum_{k=1}^L \sum_{i=1}^M ~ (c_i)_k \int_{P_i} f_k(x) \ dx  = \sum_{i=1}^M ~ \langle c_i, \tilde{f}_k \rangle 
\end{equation}
with $\tilde{f}_k = \left(\int_{P_i} f_k(x) dx\right)_{k=1}^L \in \R^{L}$.

\newpage
\noindent \textbf{Regularization Term:}\\ For the jump regularization, we can write $R(u)$ for any $u \in SBV_\Pi(\Omega, \R^L)$ as
\begin{align}\label{eq:regularizer}
    \begin{split}
        R(u) &=  \int_{J_u} \kappa\left(|u^+-u^-|\right) \  d\mathcal{H}^{d-1}
            =\sum_{(i,j) \in E} ~ \int_{\Gamma_{ij}} \kappa\left(|c_i-c_j|\right)        \ d\mathcal{H}^{d-1} \\
            &=\sum_{(i,j) \in E} ~ \kappa\left(|c_i-c_j|\right)\int_{\Gamma_{ij}}  d\mathcal{H}^{d-1} 
            =\sum_{(i,j) \in E} ~ w_{ij}~ \kappa\left(|c_i-c_j|\right)
             =: R_\Pi(c)
    \end{split}
\end{align}
identifying the weights $w_{ij} = \int_{\Gamma_{ij}} d\mathcal{H}^{d-1} =  \lvert \Gamma_{ij} \rvert$. 
With this weighting we can define the weighted finite graph $G = (V, E, w)$ as the discrete graph structure with which any continuous function $u \in SBV_\Pi(\Omega, \R^L)$ can be represented. Note that if $\kappa$ is the identity, then $R_\Pi(c)$ is equivalent to graph total variation of $c$ (cf. \cite{gilboa_nonlocal_2008,bresson_non-local_2008-1}).

\noindent  \textbf{Constraint Set:}\\
We are further carrying a constraint set when minimizing the minimal partitions problem. However, both constraints are pointwise and therefore straight forward to relate to constraints on $c_i$, i.e., the constraint set
directly translates to 
\begin{align*}
C_\Pi = \big\{ c ~ \big| ~ (c_i)_k \in [0,1],~\sum_{k=1}^L (c_i)_k = 1,  \forall i\big\}.
\end{align*}
Interestingly, the above restriction from the minimization of $F$ over $SBV(\Omega, \R^L)$ to its minimization over $SBV_\Pi(\Omega, \R^L)$ (which translates into the minimization of $F_\Pi$ over $c\in C_\Pi$) remains valid as long as the jump set of the true solution is a subset of the jumps in the partition $\Pi$, independent of what exactly the "super" jump-set of $\Pi$ is. Let us formalize this result:
\begin{figure*}
    \centering
    \includegraphics[width=0.26\textwidth]{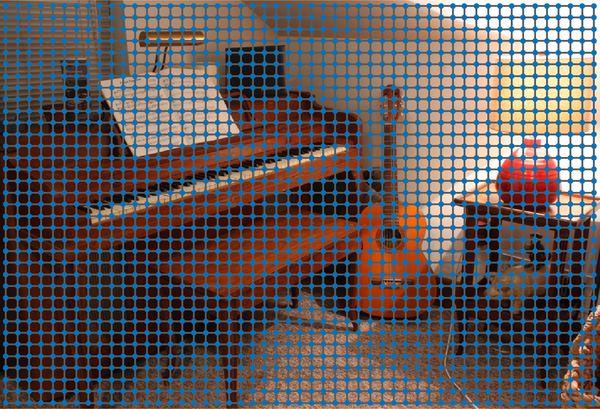}
    \includegraphics[width=0.26\textwidth]{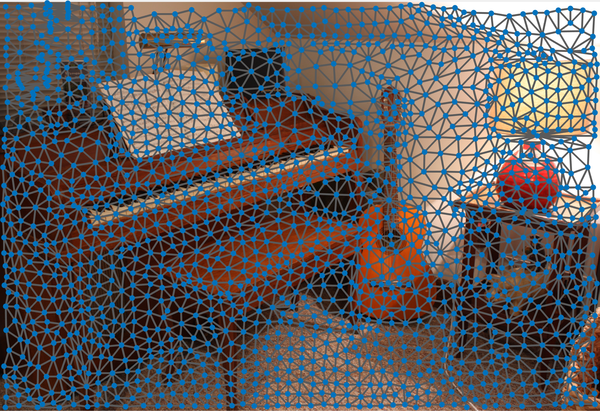}
    \includegraphics[width=0.26\textwidth]{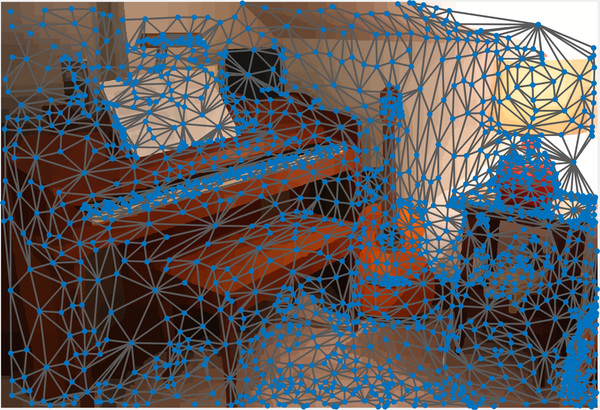}\\
    \includegraphics[width=0.26\textwidth]{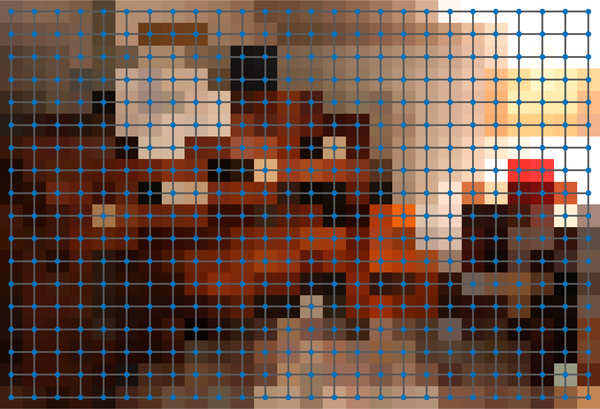}
    \includegraphics[width=0.26\textwidth]{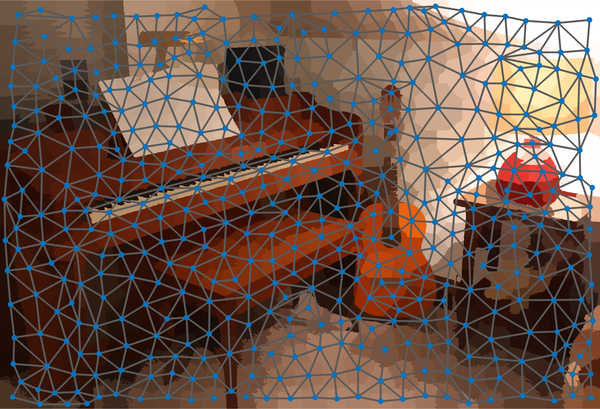}
    \includegraphics[width=0.26\textwidth]{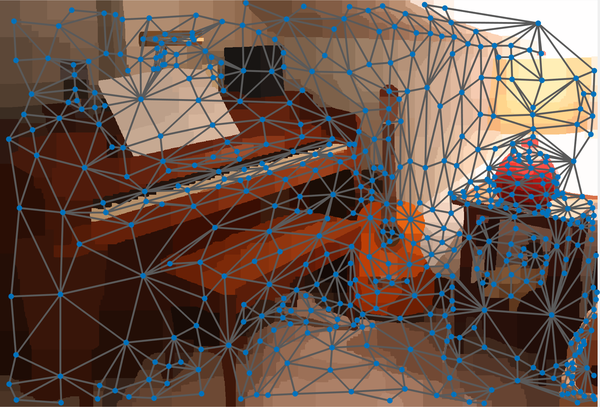}
    \caption{From left to right: Grid Sampling, SLIC Superpixels and $L^0$ Cut-Pursuit. Images from the Middlebury dataset \cite{scharstein_high-resolution_2014}. The top row shows a fine discretization into the same number of nodes for every method, whereas the lower row shows a coarse discretization with the same number of nodes for every method.}
    \label{fig:superpixel_comparison}
\end{figure*}
\begin{proposition}\label{prop1}
Assume a discretization $\Pi$ and its assorted partitions $P_i$ to be given.
Let $u^*$ be a minimizer to the continuous problem \cref{eq:minimal_partitions_binary} for given potentials $f_k$. If the jump-set $J_{u^*}$ of $u^*$ is a subset of the jump set of $\Pi$ given as the boundaries $\cup_{(i,j)\in E} \Gamma_{ij}$, then
\begin{equation*}
    \min_{u\in C} F(u) = \min_{c\in C_\Pi} F_\Pi(c),
\end{equation*}
for the discrete energy $F_\Pi = K_\Pi + R_\Pi$,
i.e. the continuous minimum $F(u^*)$ is equal to the minimum $F_\Pi(c^*)$ of the discrete energy of $F_\Pi$ under the constraints $C_\Pi$.
\end{proposition}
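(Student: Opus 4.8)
The plan is to exploit that the discrete energy $F_\Pi$ is, by construction, nothing but the continuous energy $F$ evaluated along the subclass $SBV_\Pi(\Omega,\R^L)$. First I would make the correspondence between the two sides precise: the map sending a piecewise constant $u\in SBV_\Pi(\Omega,\R^L)$ to its coefficient vector $c=(c_i)_{i=1}^M$, where $u\equiv c_i$ on $P_i$, is a bijection onto $\R^{M\times L}$. The computations in \eqref{eq:dataterm}--\eqref{eq:regularizer} already show that under this identification $K(u)=K_\Pi(c)$ and $R(u)=R_\Pi(c)$, hence $F(u)=F_\Pi(c)$, while the pointwise simplex constraints give $u\in C\iff c\in C_\Pi$. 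Consequently
$$\min_{c\in C_\Pi}F_\Pi(c)=\min_{u\in SBV_\Pi(\Omega,\R^L)\cap C}F(u),$$
and this discrete minimum is attained because $C_\Pi$ is compact and $F_\Pi$ is continuous.

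With this equivalence in hand the proposition reduces to two inequalities. The easy direction is ``$\le$'': since $SBV_\Pi(\Omega,\R^L)\subseteq SBV(\Omega,\R^L)$, restricting the feasible set can only increase the minimum, so $\min_{u\in C}F(u)\le\min_{u\in SBV_\Pi\cap C}F(u)=\min_{c\in C_\Pi}F_\Pi(c)$.

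The reverse inequality ``$\ge$'' is where the hypothesis on the jump set enters, and this is the step I expect to need the most care. I would argue that $J_{u^*}\subseteq\cup_{(i,j)\in E}\Gamma_{ij}$ forces the global minimizer $u^*$ into $SBV_\Pi(\Omega,\R^L)$. Because $u^*$ solves the binary partition problem \eqref{eq:minimal_partitions_binary} it is the characteristic vector of a Caccioppoli partition, so in the decomposition \eqref{eq:decomposition} the Cantor part vanishes ($u^*\in SBV$) and the absolutely continuous part $\nabla u^*\,\mathcal{L}^d$ vanishes almost everywhere; only the jump part survives, and it is carried by $J_{u^*}$. Restricting $Du^*$ to the interior of any $P_i$ therefore yields the zero measure, so $u^*$ is constant (a.e.) on each $P_i$, i.e. $u^*\in SBV_\Pi(\Omega,\R^L)$ with coefficient vector $c^*\in C_\Pi$. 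This gives $\min_{c\in C_\Pi}F_\Pi(c)\le F_\Pi(c^*)=F(u^*)=\min_{u\in C}F(u)$.

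Combining the two inequalities yields $\min_{u\in C}F(u)=\min_{c\in C_\Pi}F_\Pi(c)$. The only genuinely nontrivial point is the implication $J_{u^*}\subseteq\cup_{(i,j)\in E}\Gamma_{ij}\Rightarrow u^*\in SBV_\Pi(\Omega,\R^L)$, which rests on reading off from \eqref{eq:decomposition} that a function whose jump set avoids the interiors of the $P_i$ and whose Cantor and gradient parts both vanish must be piecewise constant on $\Pi$; everything else is the bookkeeping already carried out in \eqref{eq:dataterm}--\eqref{eq:regularizer}.
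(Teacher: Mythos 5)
Your proof is correct and follows essentially the same route as the paper's: both reduce the claim to the identification $F(u)=F_\Pi(c)$ for $u\in SBV_\Pi(\Omega,\mathbb{R}^L)$ established in \eqref{eq:dataterm}--\eqref{eq:regularizer}, and then sandwich the two minima using that the jump-set hypothesis places $u^*$ in $SBV_\Pi(\Omega,\mathbb{R}^L)$. If anything, you are slightly more careful than the paper at the one nontrivial step, since you justify why $J_{u^*}\subseteq\cup_{(i,j)\in E}\Gamma_{ij}$ implies $u^*\in SBV_\Pi(\Omega,\mathbb{R}^L)$ (vanishing absolutely continuous and Cantor parts of $Du^*$ for the binary minimizer), whereas the paper simply asserts this deduction.
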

\begin{proof}
See appendix.
\end{proof}

\cref{prop1} shows that if the jump set of $u^*$ is contained in $Pi$, then the exact optimum $u^*$ of the continuous problem can actually by found by computing a discrete solution $c^*$ of the function $F_\Pi$ numerically on a finite graph
Practically however, we now need to find some partition $\Pi$ that approximates (or ideally overestimates) the true jump set $J_{u^*}$, but consists of a limited number of segments. 
%
%
%
On a Cartesian grid, the equivalent operation is to subsample the image, result in the "superpixels" seen in \cref{fig:superpixel_comparison} on the left, which are not well aligned with edges in the images. However approaches such as SLIC (middle) or Cut-Pursuit (right, in the variant of \cite{tenbrinck_variational_2019}) are more adept at finding a superset of candidate partitions.
\begin{figure*}
    \centering
   \includegraphics[width=0.32\textwidth]{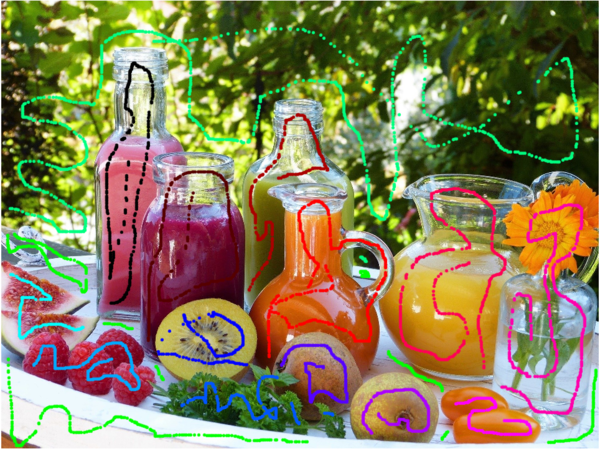}
   \includegraphics[width=0.32\textwidth]{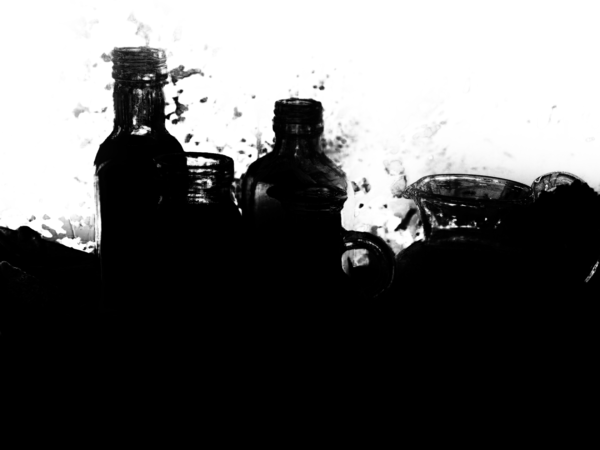}
   \includegraphics[width=0.32\textwidth]{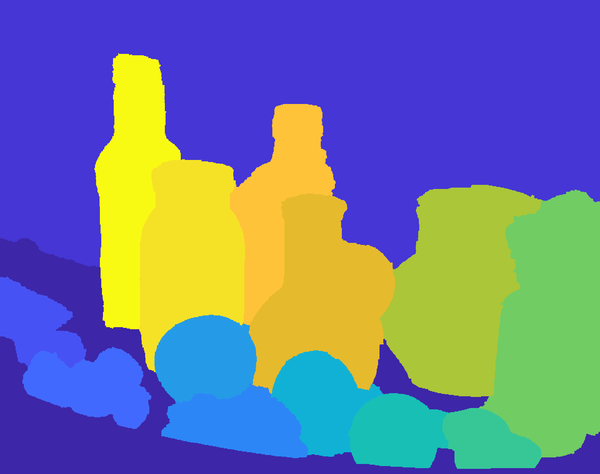}
    \caption{Illustrating the use of model-based segmentation methods: The user scribbles different objects to be segmented (left) from which a unary data term $f$ is generated, e.g. by approaches like \cite{nieuwenhuis_survey_2013} or a pointwise neural network. As the unaries are insufficient for a good segmentation (illustrated by the fact that a thresholding of the unaries shown in the middle does not segment the background well), the proposed framework offers an efficient approach to obtain accurate segmentations as shown on the right.}
    \label{fig:segmentation}
\end{figure*}
\section{Numerical Evaluation} 
This section focuses on evaluating the proposed approach. We discuss examples in segmentation and stereo estimation. For segmentation we show a practical example, where the approach is used to align the output of a pixelwise neural network. We then follow up with a detailed comparison of graphs generated by SLIC, Cut Pursuit and subsampling. 
\subsection{Segmentation}

Multi-label segmentation is a central application of minimal partition problems, having been discussed in the continuous setting in works such as  \cite{chan_active_2001,pock_convex_2009} and widely studied in discrete methods such as \cite{boykov_fast_2001,kolmogorov_what_2004}. 
To apply multilabel segmentation we use the model described in \cref{eq:minimal_partitions}, which can be recovered from \cref{eq:functional} by setting $\kappa=\operatorname{Id}$ and choosing the $L^1$ norm for $|\cdot|$, leading to an anisotropic penalty of the jumps. We solve the discrete matching on the graph by a preconditioned primal-dual algorithm as discussed in \cite{tenbrinck_variational_2019}. Relaxed solutions are matched to corresponding partitions with maximal argument.

\noindent \textbf{Use Case:}\\
As one application scenario, imagine a user wants to segment an image by marking the objects to be segmented with scribbles, see \cref{fig:segmentation} on the left. Once the scribbling is complete we train a tiny pixelwise fully connected network on classifying the scribbled pixels correctly. 
The output of this network provides us with pixelwise features as shown in \cref{fig:segmentation} in the middle. Globally matching these features  with 15 labels on the full grid requires 6 GB of memory, whereas the proposed approach reduces the memory requirements to 0.2 GB due to the graph construction and needs only 13\% of computation time in total to yield the segmentation shown in \cref{fig:segmentation} on the right, which is precise enough to conduct various image manipulations such as inserting or removing some of the bottles or fruits. 
\begin{figure}
    \centering
    \includegraphics[width=0.26\textwidth]{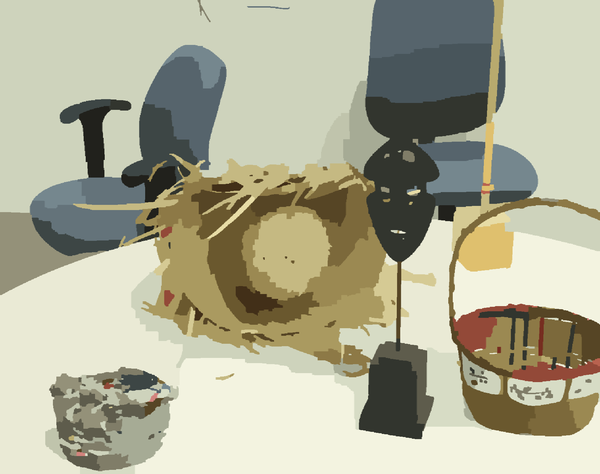}
    \includegraphics[width=0.26\textwidth]{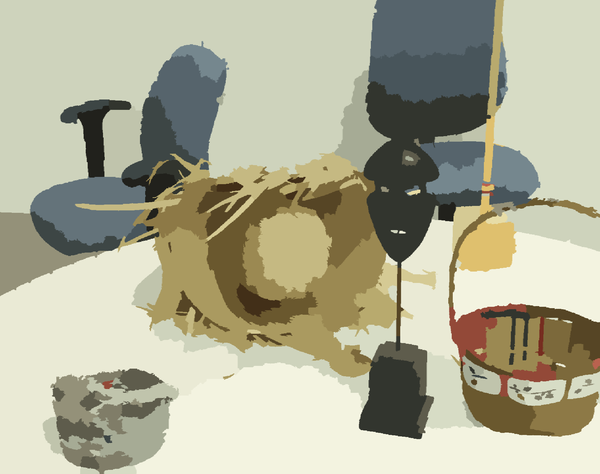}
    \includegraphics[width=0.26\textwidth]{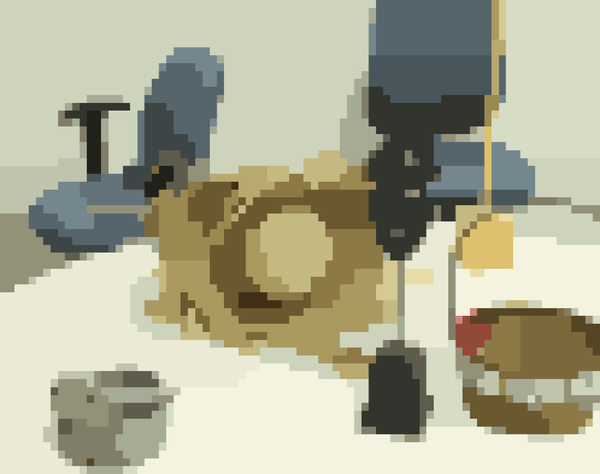}\\
    \includegraphics[width=0.26\textwidth]{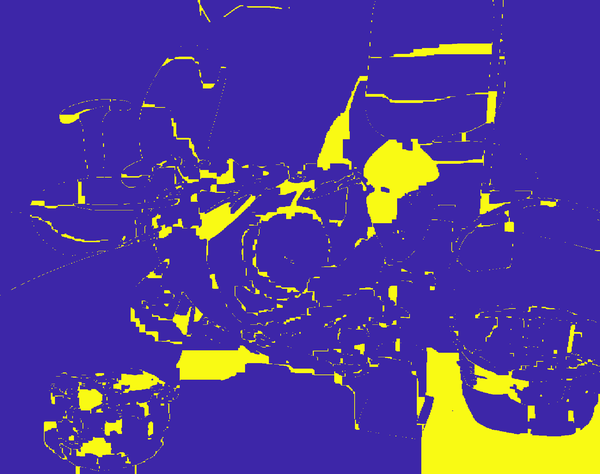}
    \includegraphics[width=0.26\textwidth]{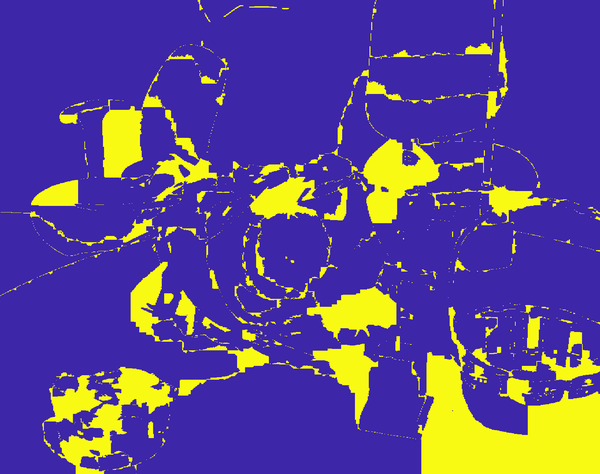}
    \includegraphics[width=0.26\textwidth]{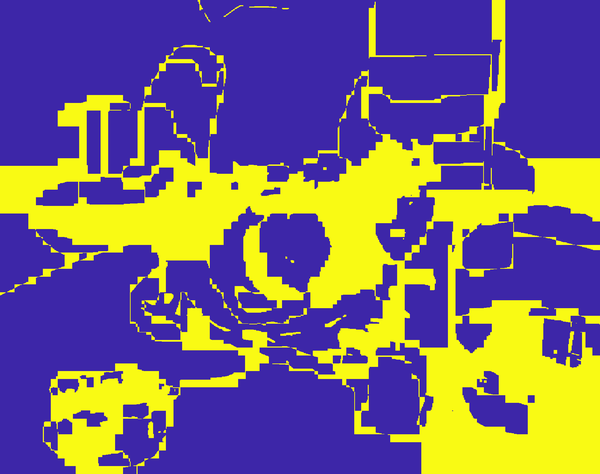}\\
    \caption{Qualitative comparison of matching quality for the example of image segmentation from a given set of pixel features. From left to right:$L^0$-CP graph, the SLIC graph, and a rectangular grid (right), all with the same number of vertices. The graphs are constructed as described in \cref{sec:theory}. The top images show the final minimal partition result. The bottom images show the errors compared to the minimal partition computed on the full pixel grid, where yellow marks regions that are matched differently compared to the ground truth matching of the full image grid. }
    \label{fig:qualitative_comparison_segmentation}
    \vspace{-0.25cm}
\end{figure}

\noindent \textbf{Quantitative Analysis:}\\
To analyze a wide range of images with canonical potentials, we turn to cartooning, i.e. multi-label segmentation with a fixed set of target colors chosen by a k-means selection. 
\Cref{fig:qualitative_comparison_segmentation} visualizes the result of the minimal partition problem for our graph discretization via $L^0$-CP, a graph constructed via SLIC, and a subsampling of the pixel grid, all with the same number of vertices. 
Checking the error maps on the bottom row of  \cref{fig:qualitative_comparison_segmentation} 
we see that both superpixel methods lead to solutions that closely match the solution at the finest level, while the subsampling is comparatively error-prone. 
The $L^0$-CP constructed discretization outperforms the SLIC-based discretization, due to its closer adherence to image edges. 
Evaluating the gained efficiency in terms of time and in times of vertices in \cref{fig:plot_comparison} shows that this behavior leads to stable improvements over a wide range of graph discretization steps, energy values can be matched very closely using the superpixel-based graph discretization.
\begin{figure*}
    \centering
    \includegraphics[width=0.44\textwidth]{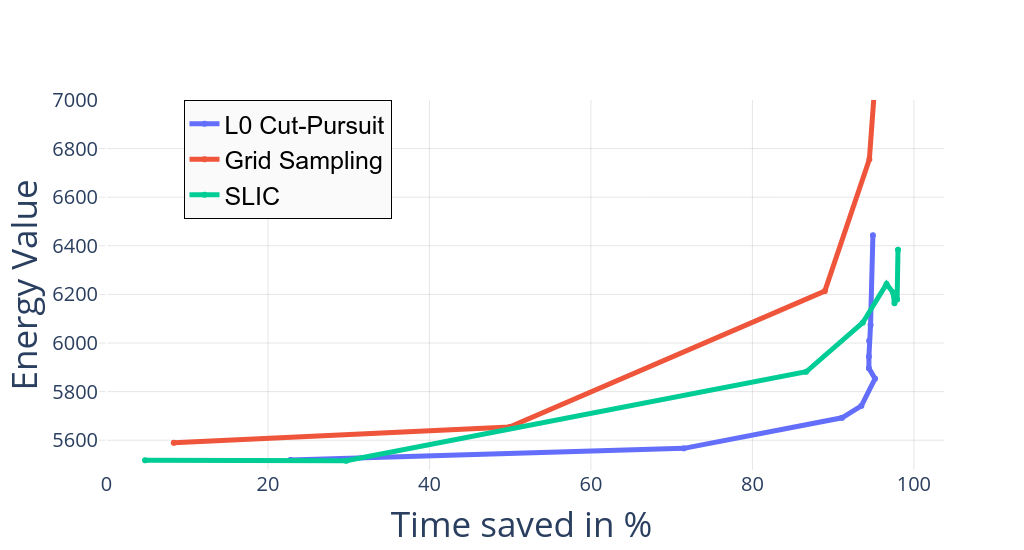}
    \includegraphics[width=0.44\textwidth]{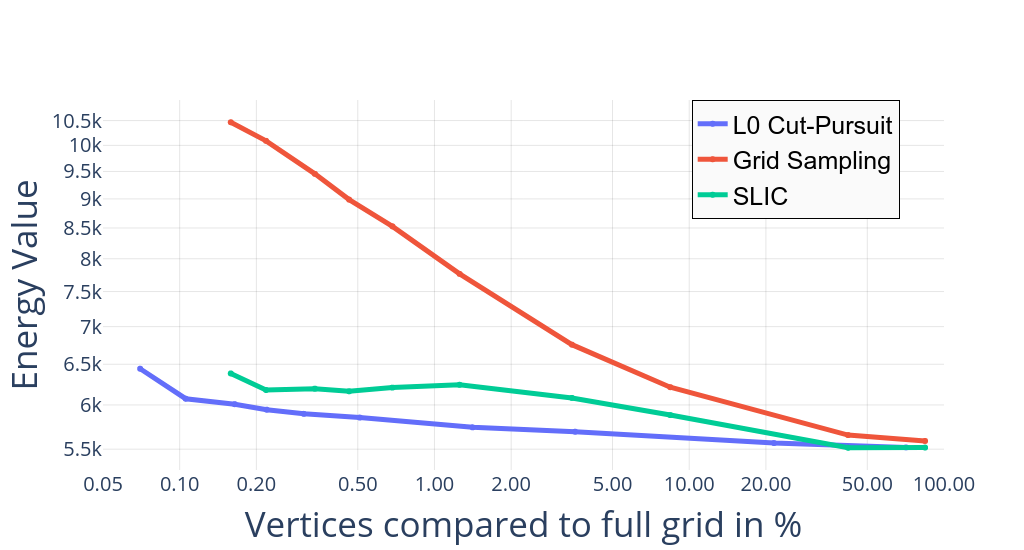}
    \caption{Computing the minimal partition on a well chosen graph discretization is much more efficient than computing it on the full grid. The y-axis in both plots denotes the energy value of the ground truth solution, which is computed on the full grid. Left: Time saved vs ground truth plotted vs the matching energy of the minimizer. Right: The number of nodes compared to the partition energy of the minimizer. }
    \label{fig:plot_comparison}
    \vspace{-0.25cm}
\end{figure*}
In \cref{tab:inset} we compare the three methods for three different examples.
We find significant savings in runtime and memory, while staying close to the original energy, observing that graph-based discretization leads to a significant improvement in accuracy, compared to computing the segmentation on a downsampled grid, and further that using the $L^0$-CP superpixels leads to the most efficient final result, even though the computation of these superpixels itself takes more time than SLIC (the time/memory to compute superpixels and construct the graph is factored into all measurements we consider).
\begin{table}
\footnotesize
\centering
\begin{tabular}{|c|c|c|c|c|c|}
            \hline
            \multirow{2}{*}{Ex.}  & \multirow{2}{*}{Methods}  & Red.  & Time     &  \multirow{2}{*}{Mem.} & Energy\\ 
                                        &                           & Rate                 &     Saved  &               &   Offset \\\hline
            \multirow{3}{*}{1}  & $L^0$-CP               &   $\mathbf{31\%}$     &   $\mathbf{79\%}$   &  $\mathbf{15MB}$            & $\mathbf{0.73 \%}$   \\
                                        & SLIC                  &   $ 41\% $ &      $41\%$              &        16MB               &$ 1.18\% $   \\
                                        & sampling                      &   $74\%$        &  $22\%$            &  32MB   &       $5.77\%$          \\\hline
            \multirow{3}{*}{2}  & $L^0$-CP               &   $\mathbf{3.6\%}$        &  $\mathbf{87\%}$   &   $\mathbf{12MB}$           & $\mathbf{3.5 \%}$   \\
                                        & SLIC                  &   $ 8.42\% $& $\mathbf{87\%}$              &        32MB             &$ 6.29\% $   \\
                                        & sampling                      &   $8.42\% $       & $\mathbf{87\%}$            & 27MB    &             $13.33\%$          \\\hline
            \multirow{3}{*}{3}  & $L^0$-CP               &   $\mathbf{6.2\%}$        &$\mathbf{83\%}$   & $\mathbf{463MB}$         & $\mathbf{2.1\%}$   \\
                                        & SLIC                  &   $ 14\% $  & $79\%$              &  1144MB                   &$ 4.43\% $   \\
                                        & sampling                      &   $14\% $      & $82\%$            &  2976MB   &             $3.79\%$          \\\hline 
            
            \end{tabular}
\caption{Different scores for three examples. Shown are the ratio of time saved and the ratio of energy mismatch. The baseline method (a full image grid) uses 301, 684 and 6113 MB for each experiment. \label{tab:inset}}
\end{table}

\subsection{Stereo Matching}
For the task of estimating disparities in stereo images the problem setting is different from that of segmentation tasks. While we want to reconstruct discrete labels for segmentation the estimated disparities between images live in a continuous range and therefore need dedicated treatment. The binarized vector structure of discrete segmentation labels ideally has to be translated to a continuously metricized label space. 
Assuming piecewise constant disparities in natural images it is still possible to transfer the graph reduction ideas to stereo estimation as proposed in \cite{pock_global_2010} and the sublabel accurate setting of \cite{mollenhoff_sublabel-accurate_2016}. 
As the data term for stereo matching can be expressed as a cost-vector defined for each pixel, we need to find a sensitive scalar data function where superpixels can be computed to construct the graph. A natural choice for such a function is the pixelwise minimizing argument of the data term. This is motivated by the intuition that constant regions of pointwise minimizing disparities likely induce constant regions of the original data term. The features $f_k$ are either given directly as absolute pixelwise disparities, or as output of a stereo network and are then matched globally to combinations of candidate disparities in a given range.
\Cref{fig:stereo} (top) gives a visual impression of the approximation behavior of the graph reduction on an exemplary stereo image. \Cref{fig:stereo} (bottom) visualizes the time vs. the achieved energy values of our method compared to the full stereo matching problem. Note the scale of the x-axis. We can easily reduce the necessary time and memory costs by using the graph-based discretization. Despite of the significant speedup for stereo matching the proposed method still is capable of producing visually pleasing results, as the matching is still computed with respect to all variables, just with an optimally chosen discretization.

\begin{figure}
    \centering
    \includegraphics[width=0.33\textwidth]{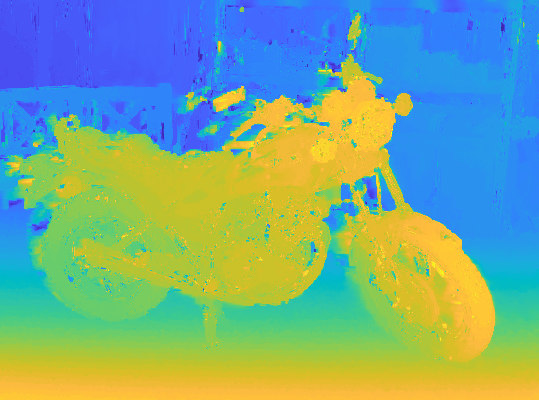}
    \includegraphics[width=0.33\textwidth]{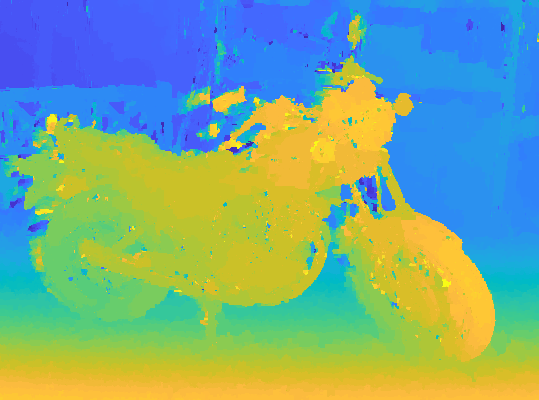}
 \includegraphics[trim={0 0 0 3.1cm},clip, width=0.5\textwidth]{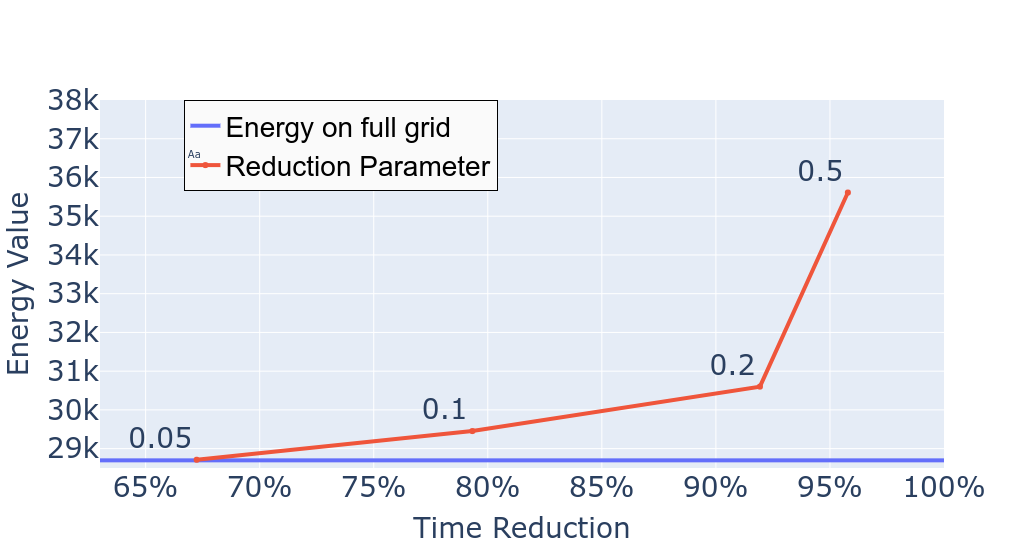}
    \caption{Results on stereo matching baselines. \textit{Top:} Comparison of full (left) and proposed reduced (right) matching. Both methods use sublabels \cite{mollenhoff_sublabel-accurate_2016} between 32 labels. The proposed method uses Cut-Pursuit (with parameter $\alpha_c = 0.1$, a higher parameter corresponds to fewer vertices in the reduced graph) to find a reduced graph, amounting to a time reduction by a factor of 4.8, although the matching quality is near indistinguishable. \textit{Bottom:} Time reduction and energy levels for different parameters $\alpha_c$, showing the granular relationship between graph reduction and difference in energy value of the matching algorithm.}
    \label{fig:stereo}
\end{figure}

\section{Conclusions}
In this work we presented strategies for the efficient realization of convex relaxations by directly moving from geometric properties of minimal partitions solutions to a graph-based discretization. 
We prove that such a graph-based discretization can be constructed in adherence to the global partitioning problem and implementing it on superpixel graphs yields accurate and efficient solutions in practice. 
We further find that using a superpixel approach that is more faithful to minimal surface energies, as the $L^0$-Cut Pursuit algorithm leads to more accurate solutions compared to SLIC and can be well worth the additional effort.
We believe that the proposed methodology can facilitate the use of convex relaxation methods in practical applications, especially if input data is of high-resolution, where memory and computation constraints made these approaches previously infeasible.

\noindent\textbf{Acknowledgements:}\\
This research was supported by the German Research Foundation (DFG) under grant MO 2962/2-1.
\clearpage
\bibliography{zotero_library.bib}

\appendix

\section{Proof of Proposition 1}\label{sec:proof}
We intend to show Proposition 1 by first showing a lemma for functionals of the form
\begin{align}\tag{5}\label{eq:functional}
\begin{split}
      F(u) = &\int_{\Omega \setminus J_u} \Phi(x, u(x), \nabla u(x)) \ dx  \\
    + &\int_{J_u} \kappa\left(|u^+-u^-|\right)|\nu_u| d\mathcal{H}^{d-1},  
\end{split}
\end{align}
for $u \in SBV(\Omega, \R^L)$ under constraints $C$ given as
\begin{align*}
 C = \big\{ u ~ \big| ~ u_k(x) \in [0,1],~ \sum_{k=1}^L u_k(x) = 1 \big\}.
 \end{align*}
\begin{lemma}\label{lemma1}
Assume a discretization $\Pi$ and its assorted partitions $P_i$ to be given.
Let $u^*$ be a minimizer to the continuous problem \cref{eq:functional}. If the jump-set $J_{u^*}$ of $u^*$ is a subset of the jump set of $\Pi$ given as the boundaries $\cup_{(i,j)\in E} \Gamma_{ij}$, then
\begin{equation*}
    \min_{u\in C} F(u) = \min_{c\in C_\Pi} F_\Pi(c),
\end{equation*}
for the discrete energy $F_\Pi = K_\Pi + R_\Pi$,
i.e. the continuous minimum $F(u^*)$ is equal to the minimum $F_\Pi(c^*)$ of the discrete energy of $F_\Pi$ under the constraints $C_\Pi$.
\end{lemma}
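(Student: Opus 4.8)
The plan is to reduce the statement to the term-by-term identities already derived in \cref{eq:dataterm} and \cref{eq:regularizer} and then obtain the equality from two opposing inequalities. The backbone is the following \emph{master identity}: for every $u \in SBV_\Pi(\Omega,\R^L)$, writing $c_i \in \R^L$ for the constant value of $u$ on $P_i$ and collecting these into $c \in \R^{M\times L}$, one has $u \in C$ if and only if $c \in C_\Pi$, and moreover $F(u) = F_\Pi(c)$. Indeed, the data part collapses to $K_\Pi(c)$ exactly as in \cref{eq:dataterm} (using $\nabla u = 0$ on the interiors), the jump part collapses to $R_\Pi(c)$ as in \cref{eq:regularizer} (on $\Gamma_{ij}$ the two-sided traces are $c_i$ and $c_j$, the unit normal gives $|\nu_u|=1$, and $\kappa(|c_i-c_j|)$ factors out of the integral, producing the weights $w_{ij}=|\Gamma_{ij}|$), and the pointwise simplex constraint translates verbatim. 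Hence restricting $F$ from $SBV(\Omega,\R^L)$ to $SBV_\Pi(\Omega,\R^L)$ is literally $F_\Pi$ on $C_\Pi$.

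First I would prove $\min_{u\in C} F(u) \le \min_{c\in C_\Pi} F_\Pi(c)$. Let $c$ minimize $F_\Pi$ over $C_\Pi$ and lift it to the piecewise constant function $u_c = \sum_{i=1}^M c_i\,\chi_{P_i} \in SBV_\Pi(\Omega,\R^L)$. By the master identity $u_c \in C$ and $F(u_c) = F_\Pi(c)$, so $\min_{u\in C} F(u) \le F(u_c) = \min_{c\in C_\Pi} F_\Pi(c)$. This direction is purely constructive and needs no hypothesis on $u^*$.

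The reverse inequality is where the hypothesis enters. Let $u^*$ be a continuous minimizer; the goal is to show $u^* \in SBV_\Pi(\Omega,\R^L)$, after which I set $c^*_i$ equal to the value of $u^*$ on $P_i$ and conclude $\min_{u\in C}F(u) = F(u^*) = F_\Pi(c^*) \ge \min_{c\in C_\Pi} F_\Pi(c)$ by the master identity, which combined with the first inequality yields equality. The subtle point — and the main obstacle — is that the hypothesis $J_{u^*}\subseteq \cup_{(i,j)\in E}\Gamma_{ij}$ only controls the \emph{jump} part of $Du^*$, whereas membership in $SBV_\Pi$ additionally requires the absolutely continuous part $\nabla u^*$ to vanish on the interior of each $P_i$. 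Since $u^* \in SBV$ has no Cantor part, the decomposition \cref{eq:decomposition} restricted to $\operatorname{int}(P_i)$ reduces to $Du^* = \nabla u^*\,\mathcal{L}^d$, and the regularizer's smooth contribution $\int_{P_i}|\nabla u^*|\,dx \ge 0$ vanishes precisely when $u^*$ is constant there.

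To close this gap I would argue by optimality: replacing $u^*$ on each $P_i$ by a constant value chosen in the simplex (e.g. the interior trace of $u^*$ on $\partial P_i$) removes the nonnegative interior total-variation cost, while — since the data term of \cref{eq:minimal_partitions_binary} is linear in $u$ and the jump penalty $\kappa$ is concave with $\kappa(0)=0$ — it does not increase the data term nor the boundary penalty across the $\Gamma_{ij}$. Thus a piecewise constant competitor attains energy no larger than $F(u^*)$, so by minimality $u^*$ may be taken in $SBV_\Pi$ without loss. Verifying rigorously that modifying the interior values does not raise the jump term on $\partial P_i$ is the genuinely technical estimate I expect to occupy the bulk of the argument; everything else is bookkeeping on the decomposition \cref{eq:decomposition}.
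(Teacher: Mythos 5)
Your overall architecture coincides with the paper's: both proofs rest on the identity $F(u)=F_\Pi(c)$ for piecewise constant $u\in SBV_\Pi(\Omega,\R^L)$ (your ``master identity'', the paper's appeal to \cref{eq:dataterm} and \cref{eq:regularizer}) and then sandwich the two minima. The paper introduces $\Xi_\Pi=\{u\in C\mid u\in SBV_\Pi(\Omega,\R^L)\}$ and writes the chain $\min_{c\in C_\Pi}F_\Pi(c)=\min_{\tilde u\in\Xi_\Pi}F(\tilde u)\le F(u^*)=\min_{u\in C}F(u)\le \min_{\tilde u\in\Xi_\Pi}F(\tilde u)$, which is exactly your pair of inequalities; your ``constructive'' direction is the inclusion $\Xi_\Pi\subseteq C$.

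Where you diverge is in how $u^*\in SBV_\Pi(\Omega,\R^L)$ is obtained. The paper deduces it directly from the hypothesis $J_{u^*}\subseteq\cup_{(i,j)\in E}\Gamma_{ij}$; you rightly note that for a general $SBV$ function this hypothesis does not control the absolutely continuous part $\nabla u^*$, and you propose an optimality-based truncation to close that gap. Two remarks. First, in the setting the lemma is actually used (Proposition 1 for \cref{eq:minimal_partitions_binary}), $u^*$ is a vector of characteristic functions, i.e.\ it takes values in the finite set $\{0,1\}^L$; then $\nabla u^*=0$ holds $\mathcal{L}^d$-a.e.\ automatically, $u^*$ is constant on each connected component of $\Omega\setminus J_{u^*}$, and the jump-set hypothesis alone already gives $u^*\in SBV_\Pi$ --- so your extra step is unnecessary there (the main text also carries the standing assumption that $u^*$ is piecewise constant). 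Second, as a repair for the general functional \cref{eq:functional}, your truncation sketch has a genuine gap: replacing $u^*$ on $P_i$ by its mean or by a boundary trace does \emph{not} in general leave the linear data term $\int_{P_i}-\sum_k f_k(x)u_k(x)\,dx$ unchanged or smaller, because $f_k$ need not be constant on $P_i$; and choosing the constant to minimize the data term instead can raise the jump penalty across $\Gamma_{ij}$. So the step you yourself flag as the technical core would not go through as stated. The clean resolution is to restrict to binary-valued (or explicitly piecewise constant) minimizers, which is what the paper implicitly does.
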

\begin{proof}
As defined in Section 3.2 of the main paper, we consider the space $SBV_\Pi(\Omega, \R^L)$ of functions in $SBV(\Omega, \R^L)$ that are piecewise-constant on partition $\Pi$. From the assumption that $J_{u^*}$ is a subset of the jump set of $\Pi$, given by $\bigcup_{(i,j)\in E} \Gamma_{ij}$, we deduce $u^* \in SBV_\Pi(\Omega, \R^L)$. 
For a partition $\Pi$ define
\begin{equation*}
    {\Xi}_\Pi = \{ u \in C \mid u \in SBV_\Pi(\Omega, \R^L) \},
\end{equation*}
for 
\begin{align*}
 C = \big\{ u ~ \big| ~ u_k(x) \in [0,1],~ \sum_{k=1}^L u_k(x) = 1 \big\}.
 \end{align*}
Then $\Pi' \leq \Pi$, where "$\leq$" refers to the partial order on partitions meaning $\Pi'$ is a finer partition than $\Pi$. This implies $\Xi_\Pi \subseteq \Xi_{\Pi'}$. For $u \in C_\Pi$ and the according $c$ from Section 3.2 we already have shown $F_\Pi ( c ) = F ( u )$.

Hence, $u^* \in \Xi_\Pi$ allows us to write
\begin{align*}
    \min_{c \in C_\Pi} F_\Pi \left( c \right) & = \min_{\tilde u \in \Xi_\Pi} F ( \tilde u ) \\
    & \leq F \left( u^* \right) = \min_{u \in C} F ( u ).
\end{align*}
Equality now follows due to $\Xi_\Pi \subseteq C$ from
\begin{align*}
    \min_{u \in C} F ( u ) \leq \min_{\tilde u \in \Xi_\Pi} F ( \tilde u ) = \min_{c \in C_\Pi} F_\Pi \left( c\right).
\end{align*}
\end{proof}

Now we can find proposition 1 as a simply corollary. The minimal partitions problem
\begin{equation}\tag{2}
    \min_{\lbrace u_k\rbrace_{k=1}^L} \ \sum_{k=1}^L \ \int_\Omega -f_k(x)u_k(x) + \int_\Omega |Du_k|,
\end{equation}
is a special case of \cref{eq:functional} by choosing the data term via 
\begin{equation}\tag{7}
    K_\Pi(c)=\sum_{k=1}^L \sum_{i=1}^M ~ (c_i)_k \int_{P_i} f_k(x) \ dx  = \sum_{i=1}^M ~ \langle c_i, \tilde{f}_k \rangle. 
\end{equation}
and setting $\kappa= \operatorname{Id}$ for the regularization term.

\setcounter{proposition}{0}
\begin{proposition}
Assume a discretization $\Pi$ and its assorted partitions $P_i$ to be given.
Let $u^*$ be a minimizer to the continuous problem \cref{eq:minimal_partitions_binary} for given potentials $f_k$. If the jump-set $J_{u^*}$ of $u^*$ is a subset of the jump set of $\Pi$ given as the boundaries $\cup_{(i,j)\in E} \Gamma_{ij}$, then
\begin{equation*}
    \min_{u\in C} F(u) = \min_{c\in C_\Pi} F_\Pi(c),
\end{equation*}
for the discrete energy $F_\Pi = K_\Pi + R_\Pi$,
i.e. the continuous minimum $F(u^*)$ is equal to the minimum $F_\Pi(c^*)$ of the discrete energy of $F_\Pi$ under the constraints $C_\Pi$.
\end{proposition}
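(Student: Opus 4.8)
The plan is to deduce the Proposition as an immediate corollary of \cref{lemma1}. That lemma already establishes the identity $\min_{u\in C}F(u)=\min_{c\in C_\Pi}F_\Pi(c)$ for \emph{every} functional of the general form \cref{eq:functional} under the stated jump-set hypothesis, so all that remains is to exhibit the relaxed minimal partitions energy \cref{eq:minimal_partitions_binary} as one particular instance of that family and to confirm that its constraint set is exactly the $C$ appearing in the lemma.

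First I would make the identification of the two structural ingredients $\Phi$ and $\kappa$ explicit. For the data term I take $\Phi$ to be the linear expression $\sum_{k=1}^L f_k(x)\,u_k(x)$; evaluated on a function that equals $c_i$ on each region $P_i$ and integrated over the bulk $\Omega\setminus J_u$, this reproduces precisely the discrete data term $K_\Pi(c)$ recorded in \cref{eq:scalar_product}, since the $P_i$ tile $\Omega$ up to the measure-zero jump set. For the regularizer I set $\kappa=\operatorname{Id}$ and read $|\cdot|$ as the $\ell^1$ norm on $\R^L$, so that on a piecewise-constant $u$ the jump integral of \cref{eq:functional} collapses, edge by edge over $E$, to the weighted graph total variation $R_\Pi(c)=\sum_{(i,j)\in E} w_{ij}\,|c_i-c_j|$ with weights $w_{ij}=|\Gamma_{ij}|$, exactly as derived in \cref{eq:regularizer}. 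Together these give the discrete energy $F_\Pi=K_\Pi+R_\Pi$ demanded by the lemma.

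Second I would verify the constraints. The continuous simplex constraint $C=\{u\mid u_k(x)\in[0,1],\ \sum_{k=1}^L u_k(x)=1\}$ carried by \cref{eq:minimal_partitions_binary} is literally the $C$ of \cref{lemma1}, and because it is pointwise, a piecewise-constant $u\in SBV_\Pi(\Omega,\R^L)$ lies in $C$ if and only if each coefficient vector $c_i$ lies in the unit simplex; this is exactly the set $C_\Pi$ introduced above. With $\Phi$, $\kappa$, and the constraint set all matched to the hypotheses of \cref{lemma1}, the lemma applies verbatim and returns $\min_{u\in C}F(u)=\min_{c\in C_\Pi}F_\Pi(c)$, which is the claim.

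The single step that deserves genuine care, rather than bookkeeping, is verifying the identity $F(u)=F_\Pi(c)$ for piecewise-constant $u$, specifically that the total-variation term $\sum_{k=1}^L\int_\Omega|Du_k|$ of \cref{eq:minimal_partitions_binary} reduces there to the jump penalty of \cref{eq:functional}. This rests on the $SBV$ decomposition \cref{eq:decomposition}: the Cantor part vanishes under the standing $SBV$ assumption, and on the piecewise-constant candidates in $SBV_\Pi(\Omega,\R^L)$ the absolutely continuous part $\nabla u\,\mathcal{L}^d$ vanishes as well, leaving only the jump measure $(u^+-u^-)\otimes\nu_u\,\mathcal{H}^{d-1}\mres J_u$, which integrates to $R_\Pi$. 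I would emphasize that the passage from minimization over all of $SBV(\Omega,\R^L)$ to the subspace $SBV_\Pi(\Omega,\R^L)$ is licensed precisely by the hypothesis $J_{u^*}\subseteq\bigcup_{(i,j)\in E}\Gamma_{ij}$, which forces $u^*\in SBV_\Pi(\Omega,\R^L)$ — the very input already exploited inside the proof of \cref{lemma1}, so no genuinely new difficulty arises beyond this identification.
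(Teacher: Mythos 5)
Your proposal is correct and follows essentially the same route as the paper: the paper also proves the proposition by instantiating \cref{lemma1} with the linear data term of \cref{eq:scalar_product} and $\kappa = \operatorname{Id}$, and your additional verification that the $SBV$ decomposition reduces to the pure jump part on piecewise-constant candidates is exactly the bookkeeping the paper delegates to Section 3.2. No gaps.
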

\begin{proof}
Apply \cref{lemma1} to \cref{eq:minimal_partitions_binary}.
\end{proof}



\section{Algorithmic Details}

\begin{figure*}
    \centering
    \includegraphics[width=0.32\textwidth]{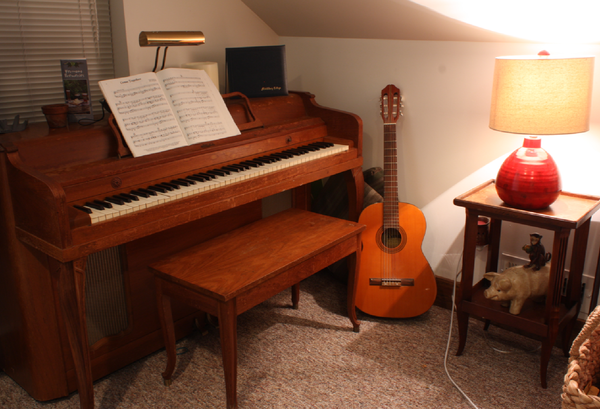}
    \includegraphics[width=0.32\textwidth]{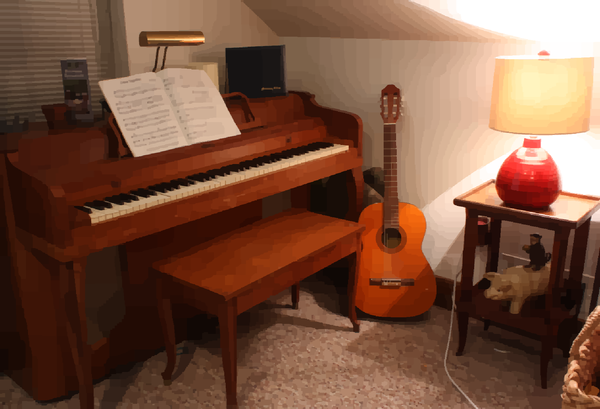}
    \includegraphics[width=0.32\textwidth]{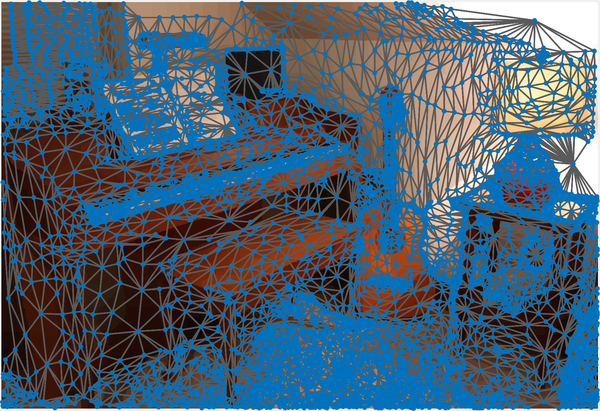}
    \caption{Reduction of the raw image (left) to 6031 nodes in the graph structure (Right). In the middle we reproject the graph structure onto the original image, visualizing the high fidelity of the representation, even as the number of nodes reduces to $0.45\%$ of the full grid.}
    \label{fig:compare}
\end{figure*}
For implementation reference we replicate some parts of the $L^0$ Cut-Pursuit \cite{landrieu_cut_2016} variant of \cite{tenbrinck_variational_2019} in the continuous setting.

To obtain a good trade-off between having as few segments as possible but still constructing a partition whose jump set is a super set of the jump set of a minimizer $u^*$, we exploit
a modification of the Cut-Pursuit (CP) algorithm of \cite{landrieu_cut_2016} discussed in \cite{tenbrinck_variational_2019}. In \cite{landrieu_cut_2016} Landrieu and Obozinski develop an approach to solve total variation problems \cite{rudin_nonlinear_1992,burger_guide_2013} with an alternating method solving graph cuts and reduced problems on the smaller graphs generated by these cuts. This is an efficient method with superior performance compared to more classical optimization methods as primal-dual \cite{chambolle_first-order_2011} or Douglas-Rachford \cite{eckstein_douglasrachford_1992} algorithms minimizing the total variation problem. The Cut-Pursuit algorithm can be further extended to a variant minimizing the $L^0$ norm of the graph gradient. This strategy is able to quickly return approximate solutions to partitioning problems for a relatively high number of partitions compared to the number of potentials $L$ we consider. The final number of partitions depends on regularization parameters and is data-dependent.
 In \cite{tenbrinck_variational_2019} Tenbrinck et. al. modify the Cut-Pursuit for $L^0$ by simplifying the algorithm to alternating between a graph cut and solving the data term separately on each generated partitions.  We will denote this method as $L^0$-Cut-Pursuit ($L^0$-CP). We discuss this algorithm in a continuous setting 
 with an $L^2$ data fidelity term, resulting in the following alternating algorithm:
For a given function $u^k \in SBV(\Omega, \R)$ and some given data $g \in L^1(\Omega, \R)$, one step of the algorithm consists of the two alternating optimization steps. The first one is
{\small
\begin{align} \label{eq:partProb}
    B^{k+1} = &\operatornamewithlimits{arg min}_{B \subset \Omega} \ \int_\Omega (u^k(x) - g(x) )1_B(x) \ dx + \alpha_c
    \int_\Omega |D 1_B|,
\end{align}}
where $1_B$ denotes a characteristic function on $B$.
This set minimization in \cref{eq:partProb} is binary and thus globally solvable.
Then compute $\Pi^{k+1}$ as the connected components of $B^{k+1}$ and, in a second step, find the mean over every partition,
\begin{align} \label{eq:reducedProb}
    c^{k+1}_i = \frac{1}{|P_i|}\int_{P_i} g(x) ~dx .
\end{align}
From the values $c^{k+1}_i$ of the partitions $P_i \in \Pi^{k+1}$, we can compute the continuous solution via
\begin{align}\label{eq:cktouk}
    u^{k+1} = \sum_{P_i \in \Pi^{k+1}} c^{k+1}_i~ 1_{P_i}. 
\end{align}
Note that such an algorithm operates on $SBV(\Omega, \R)$ rather than $SBV(\Omega, \R^L)$ in order to be much more efficient, particularly for large $L$. 
\begin{algorithm}
\SetAlgoLined
\KwResult{Discretization $\Pi$  }
 $c^0 \leftarrow \operatorname{mean}(g)$\;
 $\Pi^0 \leftarrow \{\Omega\}$\;
 \While{$\Pi^k \neq \Pi^{k+1}$}{
  $B^{k+1} \leftarrow \text{Solve } \cref{eq:partProb}$ for given $u^k$\;
  $\Pi^{k+1} \leftarrow $ connected components of $B^{k+1}$\;
  $c^{k+1} \leftarrow  \text{Solve } \cref{eq:reducedProb}$ for given $\Pi^{k+1}$\;
  $u^{k+1} \leftarrow$ Compute as in \cref{eq:cktouk}\;
  $k\leftarrow k + 1$\;
 }
 $\Pi \leftarrow \Pi^{k+1}$
 \caption{L0-Pursuit from \cite{tenbrinck_variational_2019}}
 \label{alg:cut_l0}
\end{algorithm}
In comparison to a naive subsampling on a grid (left) and the SLIC superpixels from \cite{achanta_slic_2012}, the $L^0$-CP generates less uniformly-sized regions, allowing to combine large constant regions into a single node in a graph and thus being well suited for an efficient coarsification with accurate edges. 
\Cref{alg:cut_l0} shows the  steps that this algorithm follows for further clarification.

\subsection{Implementation}
On a discrete image grid generated by sensor data, \cref{eq:partProb} becomes a binary partitioning on a discrete graph, which can be solved efficiently by a \emph{maxflow} algorithm, e.g. Boykov-Kolmogorov \cite{boykov_fast_2001}. In the end variants, such as $L^1$-Cut-Pursuit \cite{landrieu_cut_2016} or using a real-time Mumford-Shah such as \cite{strekalovskiy_real-time_2014} would also be possible candidates to find a candidate partition, yet we did not find these variants to yield either sufficient speed or sufficient accuracy around edges to be applicable - for algorithms that do not explicitly track the partitioning, the partition also has to be computed from the final result in an additional post-processing step.

When applying the Cut-Pursuit algorithm, we first need to consider which data will be used for $g$, the input to the Cut-Pursuit algorithm. A straightforward approach is to set
\begin{equation}
    g(x) = \operatornamewithlimits{argmin}_k ~ f_k(x)
\end{equation}
for the given label potentials, but especially for segmentation, using the color or grayscale image data directly is also reasonable under the assumption that piecewise constant objects in the RGB image correspond belong to separate labels, as done for algorithms such as SLIC.

We have chosen the \texttt{search-trees} implementation of \cite{boykov_graph_2006-1} to solve the discrete version of the binary partition problem stated in \cref{eq:partProb}. This can be done by reformulating the energy into a flow-graph structure with two additional terminal nodes \emph{sink} and \emph{source}. How to assign the right capacities to the edges can be taken from \cite{landrieu_cut_2016}  or \cite{tenbrinck_variational_2019}. A significant bottleneck of this straightforward \textit{maxflow} implementation is that the computation is difficult to parallelize. Thus, the computational time can increase drastically for very large images or other input data. For real-time applications with access to parallelization via GPUs or CPUs with sufficient cores we would recommend porting the entire pipeline into a single framework and using a primal dual algorithm with diagonal preconditioned stepsizes as in \cite{pock_diagonal_2011} not only for the minimal partitions problem but also the binary cuts. Especially running both subroutines on the GPU is potentially highly beneficial for large images. On the other hand, solving the binary cut with a primal-dual algorithm only approximates the solution in finite time and convergence criteria have to be chosen carefully to guarantee accurate results. In contrast \textit{maxflow} termination criteria are straightforward, which is why we focus on \textit{maxflow} in this work, aside from its applicability to weaker hardware with low specifications. 

\section{Experimental Setup}
We implement the graph-structured optimization of the convexified partition problem via a primal-dual algorithm \cite{chambolle_first-order_2011} with diagonal preconditioning \cite{pock_diagonal_2011}. The preconditioning allows us to reconcile the step sizes of the algorithm with the varying sizes of the graph partitions.
We use the implementation of this algorithm from \url{https://github.com/tum-vision/prost}, which conducts GPU computations with a Matlab wrapper. The $L^0$-CP implementation is written in Matlab using just the internal \textit{maxflow} implementation.
For the usecase study we use colors and coordinates as features to be classified via a 2-hidden-layer fully network with batchnorm and ReLU activations with 6 and 12 hidden neurons. Due to the tiny architecture and the few scribbles, the training of the network takes 18 seconds on a Laptop CPU (without fine-tuning the hyperparameters), and inferring pixelwise unaries on the entire $1440 \times 1920$ pixel image takes less than a second.

For the comparison of the inset table (Table 1) we consider three example images, 1. "cedar.bmp"\cite{winn_object_2005}, 2. "fish.jpg"\cite{martin_database_2001}, 3. "bin.png"\cite{scharstein_high-resolution_2014}, computed for a multi-label segmentation with $L=16$ labels. \textit{Reduction Rate} is the ratio between the number of segments to the full number of nodes. \textit{Time save} describes the ratio of time that was saved by the graph discretization and \textit{Energy offset} the ratio of energy mismatch.
Note that we always denote the measured the time as the sum of the time used for the reduction method and the computational time to solve the label problem.

\section{Superpixel-Sublabel Stereo Lifting}
When using a sublabel-based stereo estimation, then attention has to be paid to the treatment of the data term, as it does not bear the linear structure with to respect to the label coefficients anymore and hence interferes with the direct application of the proposed graph reduction. A closer look on the stereo problem formulation of \cite{mollenhoff_sublabel-accurate_2016} however reveals that the data term in between two neighboring labels is formed by calculating the convex envelope over the finitely sampled disparity costs. To be more precise, the stereo matching cost can be regarded as a in general non-linear data term $f_x \colon \R^L \to \R$ for each $x \in \Omega$. It directly operates on the lifted variable function $u \colon \Omega \to \R^L$. The data term is then relaxed to $f_x^{\ast\ast}$ for each $x$. This eventually amounts to a piecewise linear data term for each interval. Summing the data terms over superpixels, however, even yields a tighter convex approximation as
\begin{align}
\sum_{i=1}^M \left( \int_{P_i} f_x dx \right)^{**} \geq \int_\Omega f^{**} _x dx,
\end{align}
where the left data term is the one effectively used when applying the method from \cite{mollenhoff_sublabel-accurate_2016} to superpixels as discussed.

\section{Further plots}
\Cref{fig:compare} shows the fidelity of the L0-Cut Pursuit representation of an RGB image. The reduction $0.45\%$ in comparison to the full grid is hardly noticable without zooming in.
\Cref{fig:even_more_stats_psnr_ssim} shows a variant of Figure 6 in the paper. We visualize PSNR / SSIM / DICE values for the cartooning problem. These are computed by reassembling the output image from the piecewise constant segmentation and comparing it to the input image.

\begin{figure*}
    \centering
    \includegraphics[width=0.44\textwidth]{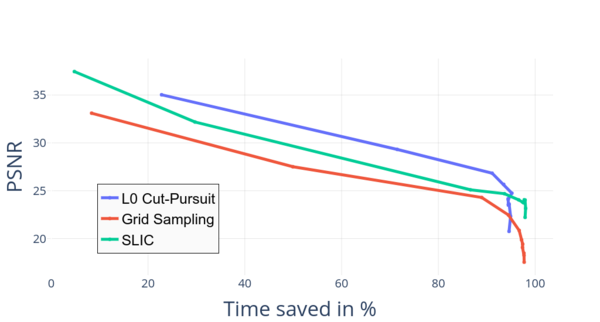}
    \includegraphics[width=0.44\textwidth]{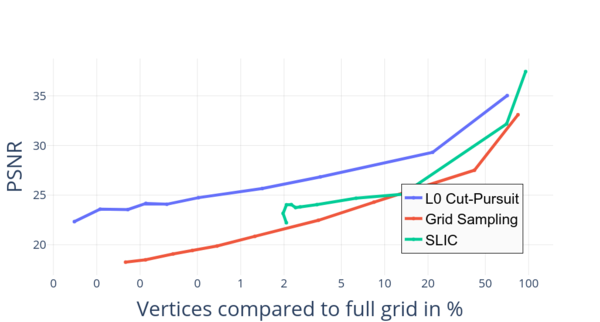}\\
    \includegraphics[width=0.44\textwidth]{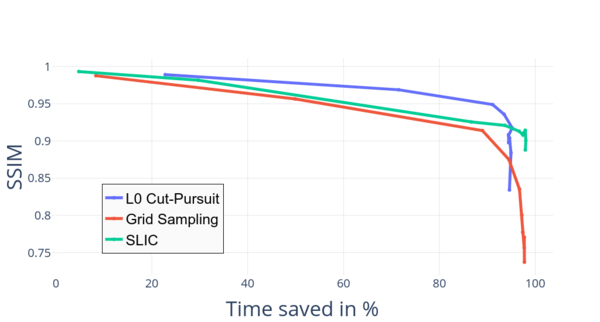}
    \includegraphics[width=0.44\textwidth]{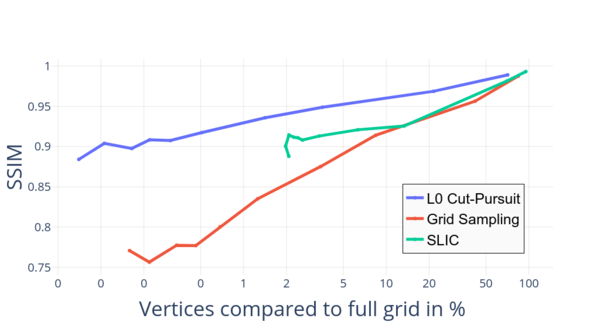}\\
    \includegraphics[width=0.44\textwidth]{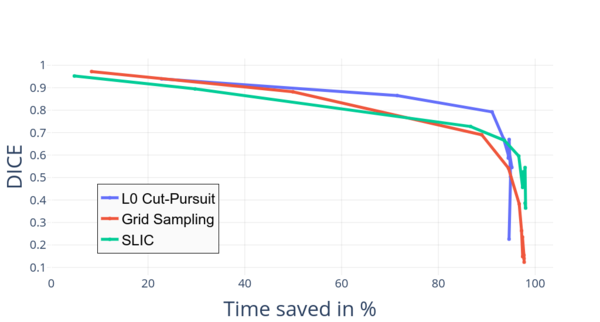}
    \includegraphics[width=0.44\textwidth]{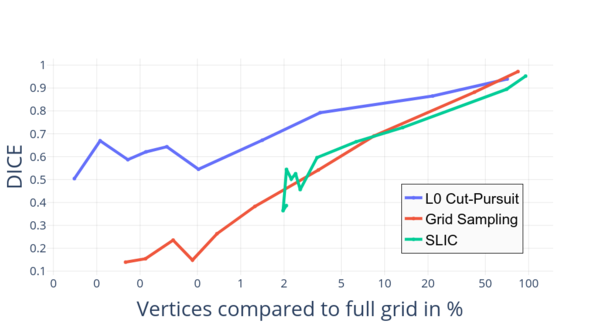}
    \caption{Computing the minimal partition on the graph is much more efficient than grid subsampling or SLIC superpixels. Left: Time saved vs 100\% on the full grid plotted vs PSNR/SSIM/DICE score of the segmentation vs the full grid segmentation. Right: The number of nodes compared also compared to the PSNR/SSIM/DICE score of the segmentation.}
    \label{fig:even_more_stats_psnr_ssim}
\end{figure*}

\clearpage

\end{document}